\documentclass[12pt]{article}

\usepackage[utf8]{inputenc} 
\usepackage{titling}
\usepackage{xcolor}
\usepackage{amsmath}
\usepackage{amssymb}
\usepackage{amsthm}
\usepackage[font=footnotesize,labelfont=bf]{caption}
\usepackage{anyfontsize}
\usepackage[font=footnotesize]{subcaption}
 

\newcommand{\eps}{\varepsilon}

\newcommand{\RR}{\mathbb{R}}

\newcommand{\EE}{\mathbb{E}}

\newcommand{\Wcal}{\mathcal{W}}
\newcommand{\Dcal}{\mathcal{D}}
\newcommand{\Lip}{\mathrm{Lip}_{1}(X)}
\newcommand{\Lipxy}{\mathrm{Lip}_{1}(X{\times}Y)}
\newcommand{\Gcal}{\mathcal{G}}

\newtheorem{theorem}{Theorem}
\newtheorem{lemma}[theorem]{Lemma}
\newtheorem{assumption}[theorem]{Assumption}

\pretitle{
  \begin{center}
    \Large\bfseries
  \noindent\vrule height 1.5pt width \textwidth 
  \vskip .75em plus .25em minus .25em
}
\posttitle{%
  \end{center}%
  \noindent\vrule height 2.0pt width \textwidth
  \vskip .75em plus .25em minus .25em
}

\begin{document}
\title{About exchanging expectation and supremum for conditional Wasserstein GANs}

\author{J\"org Martin\thanks{joerg.martin(at)ptb.de, Physikalisch-Technische Bundesanstalt (PTB), Abbestraße 2, 10587 Berlin, Germany.}}\date{}
\maketitle
\vspace{-2cm}
\begin{abstract}
In cases where a Wasserstein GAN depends on a condition the latter is usually handled via an expectation within the loss function.
Depending on the way this is motivated, the discriminator is either required to be Lipschitz-1 in both or in only one of its arguments. For the weaker requirement to become usable one needs to exchange a supremum and an expectation. This is a mathematically perilous operation, which is, so far, only partially justified in the literature. This short mathematical note intends to fill this gap and provides the mathematical rationale for discriminators that are only partially Lipschitz-1 for cases where this approach is more appropriate or successful. 
\end{abstract}

Suppose our training data follows a distribution with the density $\pi(x,y)$ on $X\times Y$, where $X=\RR^{n_X}$, $Y=\RR^{n_Y}$.
\footnote{The assumption that $X$ and $Y$ are finite-dimensional is mostly for the sake of presentation, the argument can easily be extended to a Banach space setting as in \cite{adler2018banach}.} 
We assume that for each $y\in Y$ there is a probability distribution $\Gcal(y)$ on $X$ that can be sampled from via a measurable map $G(z,y),z\sim\eta$, where $\eta$ denotes the distribution on an arbitrary probability space. 
In terms of generative adversarial networks (GANs) the mapping $G$ is typically known as the generator \cite{goodfellow2020generative}, is implemented as a neural network and is used to approximate a target distribution. In contrast to the original GAN framework, $\Gcal$ and $G$ are here assumed to depend on $y$, which is known as a conditional GAN (cGAN) \cite{mirza2014conditional}.

We want $\Gcal(y)$ to approximate the conditional distribution $\pi(x|y)$:
\begin{align}
	\label{eq:approximation}
	\Gcal(y) \approx \pi(x|y) = \frac{\pi(x,y)}{\pi(y)} \,.
\end{align}
If this approximation is done by minimizing the Wasserstein-1-distance , the term (conditional) Wasserstein GAN ((c)WGAN is used.
The concept of WGANs was first proposed in the seminal works \cite{arjovsky2017wasserstein,gulrajani2017improved}. A study of a conditional relation \eqref{eq:approximation} under the Wasserstein metric is done in various works in the literature for different applications, compare e.g. \cite{ebenezer2019single,luo2018eeg,jin2019image,patzelt2019conditional,adler2018deep,adler2019deep,ohayon2021high}. In these works there appear to be several philosophies on how to deal with the $y$-dependency in \eqref{eq:approximation}. One natural approach is the following: Sample $\tilde{y}\sim \pi(y)$ and then $\tilde{x}\sim \Gcal(\tilde{y})$ and call their distribution $\pi(\tilde{x}, \tilde{y})$. Then, minimize the following distance w.r.t $G$
\begin{align}
	\label{eq:Wxy}
	\Wcal(\pi(\tilde{x}, \tilde{y}),\pi(x,y)) = \!\!\!\!\! \sup_{D\in\Lipxy}\!\! \left(\EE_{x,y\sim\pi(x,y)}\left[D(x,y)\right]{-}\EE_{z\sim\eta, y\sim \pi(y)}\left[D(G(z,y),y)\right]\right)\,,
\end{align}
with $\Lipxy$ denoting the set of real-valued Lipschitz-1 functions on $X\times Y$ and where we used the Kantorovich-Rubinstein duality. Note that this requires the discriminator $D$ to be Lipschitz-1 in \emph{both of its arguments}. While the conception above seems to be, at least implicitly, behind many works on cWGANs in the literature (compare e.g. \cite{ebenezer2019single,luo2018eeg,jin2019image,patzelt2019conditional}), the Lipschitz constraint is often only enforced w.r.t the first argument. One might attempt to motivate this with the fact that the marginal $\pi(y)$ is identical for $\pi(\tilde{x}, \tilde{y})$ and $\pi(x,y)$, but to the best of the author's knowledge there is no exhaustive argument in the literature. 

A different line of reasoning for a partial Lipschitz continuity of $D$, which this note will follow, is given in \cite{adler2018deep,adler2019deep}, where instead of \eqref{eq:Wxy} the objective
\begin{align}
	\label{eq:averaged_CWGAN}
	\EE_{y\sim \pi(y)}[\Wcal(\Gcal(y),\pi(x|y))] 
\end{align}
is considered, from which the authors obtain
\begin{align}
	\label{eq:useful_CWGAN}
	\sup_{D\in\Dcal}\EE_{(x,y)\sim\pi(x,y)}\left[D(x,y)-\EE_{z\sim\eta}\left[D(G(z,y),y)\right]\right] \,,
\end{align}
where $\mathcal{D}$ denotes the set of real-valued (and measurable) maps defined on $X\times Y$ and Lipschitz-1 in $X$ - and not necessarily in $Y$. The objective \eqref{eq:useful_CWGAN} is in alignment with a discriminator that is Lipschitz-1 in $x$ only. However, to get from \eqref{eq:averaged_CWGAN} to \eqref{eq:useful_CWGAN} a supremum and an expectation have to be exchanged which might cause problems concerning measurability. This issue is not entirely resolved in \cite{adler2018deep,adler2019deep} and the authors need to assume that there are no problems in this regard. 

A workaround could be to assume that $\pi(y)$ is only supported on a discrete subset, on which measurability is trivial. For cases where $y$ describes continuous objects such as images or audio data this becomes a bit of a cheat, as this support would not only have to include all of the training data, but also any possible $y$ one might encounter during evalution of $\Gcal(y)$ after the training. This would thus require some sort of further artificial discretization (e.g. by machine epsilon). It seems however much more natural, satisfactory and intuitive to allow for a continuous $\pi(y)$ in these cases. The goal of this mathematical note is to demonstrate that this can be done and that, under under relatively mild conditions, there is a consistent way to get from \eqref{eq:averaged_CWGAN} to \eqref{eq:useful_CWGAN}. We thereby provide a mathematical rationale for using the object \eqref{eq:useful_CWGAN} without any need of artifical discretization.

\begin{assumption}
\label{ass:compact_support}
The marginal $\pi(y)$ has a compact support, say on $K_{Y}\subseteq Y$.
\end{assumption}
\begin{assumption}
\label{ass:W_continuity}
The mapping $y\mapsto\pi(x|y)$ is well-defined on $K_{Y}$ and continuous
w.r.t the Wasserstein metric.
\end{assumption}
\begin{assumption}
\label{ass:E_continuity}
The object $\EE_{z\sim\eta}\left[|G(z,y)-G(z,y')|\right]$ converges
to $0$ for $y\rightarrow y'$ uniformly for all $y,y'\in K_{Y}$. 
\end{assumption}

The assumptions above could probably be weakened. However, they appear to be not unreasonable. Assumption $3$, for instance, is true if $G$ is a neural network with Lipschitz continuous activations and $\eta$ is a Gaussian distribution or a distribution of bounded support. Assumption $1$ is often true in practice, but can be dropped provided that the continuity in Assumption \ref{ass:W_continuity} is uniform and the following lemma remains true.

\begin{lemma}
\label{lem:continuity}
The map
$F:K_{Y}\ni y\mapsto\Wcal(\Gcal(y),\pi(x|y))$
is uniformly continuous\,.
\end{lemma}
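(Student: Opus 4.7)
The plan is to bound $|F(y)-F(y')|$ by two terms, each addressed by one of the assumptions, and then invoke Heine--Cantor to upgrade continuity to uniform continuity.

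First I would use the standard triangle inequality for the Wasserstein-1 metric to write
\begin{align*}
|F(y)-F(y')| \;\le\; \Wcal(\Gcal(y),\Gcal(y')) \,+\, \Wcal(\pi(x|y),\pi(x|y'))\,.
\end{align*}
The second term tends to $0$ as $y\to y'$ directly by Assumption~\ref{ass:W_continuity}.

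For the first term I would exploit the fact that $\Gcal(y)$ is the pushforward of $\eta$ under $G(\cdot,y)$. The map $z\mapsto(G(z,y),G(z,y'))$ pushes $\eta$ forward to a coupling of $\Gcal(y)$ and $\Gcal(y')$, so by the primal definition of $\Wcal$
\begin{align*}
\Wcal(\Gcal(y),\Gcal(y')) \;\le\; \EE_{z\sim\eta}\bigl[|G(z,y)-G(z,y')|\bigr]\,,
\end{align*}
which tends to $0$ as $y\to y'$ by Assumption~\ref{ass:E_continuity}, and in fact uniformly in $y,y'\in K_Y$.

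Combining the two bounds shows that $F$ is continuous on $K_Y$. Since $K_Y$ is compact by Assumption~\ref{ass:compact_support}, the Heine--Cantor theorem then upgrades this to uniform continuity. The only minor obstacle is the triangle inequality step: it holds because the Wasserstein-1 distance is a genuine metric on probability measures with finite first moment, and finiteness of moments is implicitly used when citing Kantorovich--Rubinstein in \eqref{eq:Wxy}, so no extra work is needed here. If Assumption~\ref{ass:compact_support} were dropped, one would instead need the convergence in Assumption~\ref{ass:W_continuity} to be uniform, which is precisely the remark made in the paragraph preceding the lemma.
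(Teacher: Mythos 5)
Your proof is correct and is essentially the paper's argument: you obtain the same two error terms $\Wcal(\pi(x|y),\pi(x|y'))$ and $\EE_{z\sim\eta}[|G(z,y)-G(z,y')|]$ and conclude via Heine--Cantor on the compact set $K_Y$. The only (cosmetic) difference is that you derive the bound from the metric triangle inequality plus the explicit coupling $z\mapsto(G(z,y),G(z,y'))$, whereas the paper performs the same estimate directly inside the Kantorovich--Rubinstein supremum over Lipschitz test functions.
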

\begin{proof}
As $K_{Y}$ is compact (Assumption 1), it is enough to show that $F$
is continuous. Take $y,y'\in K_{y}$ and note that
\begin{align*}
F(y) & =\sup_{f\in\Lip}\left(\EE_{x\sim\pi(x|y)}\left[f(x)\right]-\EE_{z\sim\eta}\left[f(G(z,y)\right]\right)\\
 & \leq\Wcal(\pi(x|y),\pi(x|y'))+\EE_{z\sim\eta}\left[|G(z,y)-G(z,y')|\right]+F(y') \,.
\end{align*}
By symmetry we can exchange the roles of $y$ and $y'$ and arrive at
$
|F(y)-F(y')|\leq\Wcal(\pi(x|y),\pi(x|y'))+\EE_{z\sim\eta}\left[|G(z,y)-G(z,y')|\right]
$,
which converges to 0 as $y\rightarrow y'$ due to Assumption 2 and 3. 
\end{proof}
Assumptions \ref{ass:compact_support}-\ref{ass:E_continuity} were basically chosen to make Lemma \ref{lem:continuity} true as it is the key to proving the main statement of this note, which will be our next task. In essence, Lemma \ref{lem:continuity} allows us to reduce the average over $y\sim \pi(y)$ in \eqref{eq:averaged_CWGAN} to a finite sum up to an arbitrarily small error. On finite sets measurability is no issue, so that we can safely exchange the supremum and the expectation.
\begin{theorem}
\label{thm:equivalence}
Under the Assumptions \ref{ass:compact_support}-\ref{ass:E_continuity} we have
\begin{align*}
\EE_{y\sim\pi(y)}\left[\Wcal(\Gcal(y),\pi(x|y)\right]=\sup_{D\in\Dcal}\text{\ensuremath{\EE}}_{(x,y)\sim\pi(x,y)}\left[D(x,y)-\EE_{z\sim\eta}\left[D(G(z,y),y)\right]\right] \,.
\end{align*}
\end{theorem}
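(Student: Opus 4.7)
The plan is to prove both inequalities separately. The easy direction---the right-hand side bounded by the left---follows immediately from Kantorovich-Rubinstein duality: for any $D\in\Dcal$ and any fixed $y$, the section $D(\cdot,y)$ lies in $\Lip$, so
\begin{align*}
\EE_{x\sim\pi(x|y)}[D(x,y)] - \EE_{z\sim\eta}[D(G(z,y),y)] \leq \Wcal(\pi(x|y),\Gcal(y)) = F(y).
\end{align*}
Integrating against $\pi(y)$ and then taking the supremum in $D$ closes this side.

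For the reverse inequality I would follow the discretization strategy flagged just before the theorem. Fix $\eps>0$. By Lemma \ref{lem:continuity} together with Assumptions \ref{ass:W_continuity} and \ref{ass:E_continuity}, I can pick a $\delta>0$ so small that whenever $y,y'\in K_Y$ satisfy $|y-y'|<\delta$, each of the three quantities $|F(y)-F(y')|$, $\Wcal(\pi(x|y),\pi(x|y'))$, and $\EE_{z\sim\eta}[|G(z,y)-G(z,y')|]$ stays below $\eps$. The compactness of $K_Y$ (Assumption \ref{ass:compact_support}) then lets me choose a finite Borel partition $K_Y = A_1 \cup \cdots \cup A_n$ with each $A_j$ of diameter less than $\delta$, together with a representative $y_j\in A_j$.

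Next, for each $y_j$ I would select an $\eps$-optimal $f_j\in\Lip$ for the supremum defining $F(y_j)$, and assemble the candidate
\begin{align*}
D(x,y) := \sum_{j=1}^{n} f_j(x)\,\mathbf{1}_{A_j}(y).
\end{align*}
This $D$ is jointly measurable (finite sum of products of continuous $f_j$ with indicators of Borel sets), and every $y$-section equals some $f_j$, hence lies in $\Lip$, so $D\in\Dcal$. For $y\in A_j$ the Lipschitz-1 property of $f_j$ bounds the difference between the $y$-integrand and the analogous quantity at $y_j$ by $\Wcal(\pi(x|y),\pi(x|y_j))+\EE_{z\sim\eta}[|G(z,y)-G(z,y_j)|] \leq 2\eps$; the value at $y_j$ is within $\eps$ of $F(y_j)$ by the choice of $f_j$; and $F(y_j)$ is within $\eps$ of $F(y)$ by uniform continuity. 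Chaining these three estimates, the $y$-integrand dominates $F(y)-4\eps$ on each $A_j$. Integrating against $\pi(y)$ shows that the right-hand side of the theorem is at least $\EE_{y\sim\pi(y)}[F(y)] - 4\eps$, and letting $\eps\to 0$ finishes the argument.

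The main obstacle, and the reason the assumptions in the note are imposed, is ensuring $D$ is simultaneously measurable in $y$ and Lipschitz-1 in $x$: selecting a pointwise-optimal $f_y$ for each $y$ would raise the measurable-selection issues left unresolved in \cite{adler2018deep,adler2019deep}. The finite Borel partition side-steps this entirely, trading pointwise optimality for a loss of order $\eps$ which the uniform continuity of $F$ absorbs. A subordinate technicality---extracting a Borel partition of diameter $<\delta$ from a finite subcover of $K_Y$---is standard and poses no difficulty.
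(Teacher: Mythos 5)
Your proposal is correct and follows essentially the same route as the paper: Kantorovich--Rubinstein for the easy inequality, then a finite partition of the compact support into small-diameter Borel pieces, near-optimal Lipschitz witnesses at representatives, and a piecewise-constant-in-$y$ discriminator $D(x,y)=\sum_j f_j(x)\mathbf{1}_{A_j}(y)$, with the three uniform-continuity estimates controlling the error. The only differences are cosmetic ($\eps$ versus $\eps/4$ bookkeeping, and phrasing the estimate as a lower bound on the integrand rather than an upper bound on $\EE_{y\sim\pi(y)}[F(y)]$).
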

\begin{proof}
The inequality 
\begin{align*}
\EE_{y\sim\pi(y)}[F(y)] & =\EE_{y\sim\pi(y)}\left[\Wcal(\Gcal(y),\pi(x|y))\right]\\
 & \geq\sup_{D\in\Dcal}\text{\ensuremath{\EE}}_{(x,y)\sim\pi(x,y)}\left[D(x,y)-\EE_{z\sim\eta}\left[D(G(z,y),y)\right]\right]
\end{align*}
is simply a consequence of the Kantorovich-Rubinstein duality, cf. \cite{adler2018deep}. We can therefore prove the claim once we have shown that
for any $\eps>0$ 
\begin{align}
\EE_{y\sim\pi(y)}[F(y)]\leq\sup_{D\in\Dcal}\text{\ensuremath{\EE}}_{(x,y)\sim\pi(x,y)}\left[D(x,y)-\EE_{z\sim\eta}\left[D(G(z,y),y)\right]\right]+\eps\label{eq:DifficultDirection}\,.
\end{align}
Using the Lemma above and Assumption 2 and 3 we can pick $\delta_{\eps}>0$
such that $|F(y)-F(y')|\leq\frac{\eps}{4}$, $\Wcal(\pi(x|y),\pi(x|y'))\leq\frac{\eps}{4}$
and $\EE_{z\sim\eta}\left[|G(z,y)-G(z,y')|\right]\leq\frac{\eps}{4}$ whenever
$|y-y'|<\delta_{\eps}$ for any $y,y'\in K_{Y}$. The compact set
$K_{Y}$ can be covered with a finite numbers of disjoint boxes $C_{k}^{\eps}$
of diameter smaller than $\delta_{\eps}$. For each $k$ fix some $y_{k}^{\eps}\in C_k^{\eps} \cap K_{Y}$.
Writing $F^{\eps}(y)=\sum_{k}F(y_{k}^{\eps})1_{C_{k}^{\eps}}(y)$
with $F$ as above and with $1_{C_{^{k}}^{\eps}}(y)$ equal to 1 if
$y\in C_{k}^{\eps}$ and else 0, we have 
\begin{align}
\EE_{y\sim\pi(y)}\left[F(y)\right] & =\EE_{y\sim\pi(y)}\left[F(y)-F^{\eps}(y)\right]+\EE_{y\sim\pi(y)}\left[F^{\eps}(y)\right]\nonumber \\
 & \leq\frac{\eps}{4}+\EE_{y\sim\pi(y)}\left[F^{\eps}(y)\right]\label{eq:Bounding_by_F_eps}\,.
\end{align}
Using once more the Kantorovich-Rubinstein duality we can now pick for every $k$ a $f_{k}^{\eps}\in\Lip$ such that
\begin{align}
F(y_{k}^{\eps}) \leq \frac{\eps}{4} + \EE_{x\sim\pi(x|y_{k}^{\eps})}\left[f_{k}^{\eps}(x)\right]-\EE_{z\sim\eta}\left[f_{k}^{\eps}(G(z,y_{k}^{\eps})\right] \,. \label{eq:Fy_eps_k} 
\end{align}
Given $y\in C_{k}^{\eps} \cap K_Y$ we can use $\EE_{z\sim\eta}\left[|G(z,y)-G(z,y_{k}^{\eps})|\right]\leq\frac{\eps}{4}$
by choice of $\delta_{\eps}$ and bound (\ref{eq:Fy_eps_k}) further by 
\begin{align}
F(y_{k}^{\eps})\leq2\cdot\frac{\eps}{4}+\EE_{x\sim\pi(x|y_{k}^{\eps})}\left[f_{k}^{\eps}(x)\right]-\EE_{z\sim\eta}\left[f_{k}^{\eps}(G(z,y))\right]\label{eq:F_y_eps_k_y_bound}\,.
\end{align}
Combining (\ref{eq:Bounding_by_F_eps}) and (\ref{eq:F_y_eps_k_y_bound})
we end up with 
\begin{align}
\EE_{y\sim\pi(y)}\left[F(y)\right]
&\leq \frac{3\eps}{4}+\EE_{y\sim\pi(y)}\left[\sum_{k}1_{C_{k}^{\eps}}(y)\left(\EE_{x\sim\pi(x|y_{k}^{\eps})}\left[f_{k}^{\eps}(x)\right]-\EE_{z\sim\eta}\left[f_{k}^{\eps}(G(z,y))\right]\right)\right]\nonumber \\
&\leq\eps+\EE_{y\sim\pi(y)}\left[\sum_{k}1_{C_{k}^{\eps}}(y)\left(\EE_{x\sim\pi(x|y)}\left[f_{k}^{\eps}(x)\right]-\EE_{z\sim\eta}\left[f_{k}^{\eps}(G(z,y))\right]\right)\right]\,,\label{eq:total_F_bound}
\end{align}
where we used in the last step that $\Wcal(\pi(x|y),\pi(x|y_{k}^{\eps}))\leq\frac{\eps}{4}$.
Writing now 
$D^{\eps}(x,y)=\sum_{k}1_{C_{k}^{\eps}}(y)\cdot f_{k}^{\eps}(x)$
we have apparently $D^{\eps}\in\mathcal{D}$ and, moreover,
\begin{align*}
&\text{\ensuremath{\EE}}_{(x,y)\sim\pi(x,y)}\left[D^{\eps}(x,y)-\EE_{z\sim\eta}\left[D^{\eps}(G(z,y),y)\right]\right]\\
&=\EE_{y\sim\pi(y)}\left[\sum_{k}1_{C_{k}^{\eps}}(y)\left(\EE_{x\sim\pi(x|y)}\left[f_{k}^{\eps}(x)\right]-\EE_{z\sim\eta}\left[f_{k}^{\eps}(G(z,y))\right]\right)\right]\,,
\end{align*}
which is just the second term on the right hand side of (\ref{eq:total_F_bound}); this yields
\begin{align*}
\EE_{y\sim\pi(y)}\left[F(y)\right] & \leq\eps+\text{\ensuremath{\EE}}_{(x,y)\sim\pi(x,y)}\left[D^{\eps}(x,y)-\EE_{z\sim\eta}\left[D^{\eps}(G(z,y),y)\right]\right]\\
 & \leq\eps+\sup_{D\in\Dcal}\text{\ensuremath{\EE}}_{(x,y)\sim\pi(x,y)}\left[D(x,y)-\EE_{z\sim\eta}\left[D(G(z,y),y)\right]\right]\,,
\end{align*}
which shows (\ref{eq:DifficultDirection}). 
\end{proof}
\vskip 0.2in
\bibliography{NoteCWGAN}
\bibliographystyle{acm}

\end{document}